\documentclass{article}

\usepackage[preprint]{corl_2026} 

\usepackage{amsmath}
\usepackage{mathtools}
\usepackage{amssymb}
\usepackage{amsfonts}   
\usepackage{amsthm}     
\usepackage{dsfont}

\usepackage[pdftex]{graphicx}
\usepackage{wrapfig}
\usepackage{caption}
\usepackage{subcaption}
\usepackage{algorithm}
\usepackage{algpseudocode}
\usepackage{bm}

\usepackage{booktabs}   
\usepackage{makecell}

\usepackage[table]{xcolor}  
\definecolor{mydarkblue}{rgb}{0,0.08,0.45}

\definecolor{base}{HTML}{AF56BF}
\definecolor{ref}{HTML}{F58A45}
\definecolor{task}{HTML}{407CB6}
\definecolor{pps}{HTML}{55A86E}

\definecolor{revcolor}{HTML}{B8860B}
\newcommand{\hl}[1]{\textcolor{revcolor}{#1}}
\renewcommand{\hl}[1]{#1}

\usepackage{hyperref}
\hypersetup{
    colorlinks=true,
    linkcolor=mydarkblue,
    citecolor=mydarkblue,
    filecolor=mydarkblue,
    urlcolor=mydarkblue,
}

\usepackage{animate}

\usepackage[]{mdframed}

\usepackage{xparse}     
\usepackage{expl3}      
\usepackage{xspace}
\usepackage{lipsum}     
\usepackage{algorithm, algpseudocode}
\usepackage{microtype}
\usepackage{cleveref}
\newtheorem{proposition}{Proposition}

\crefname{section}{Sec.}{Secs.}
\crefname{figure}{Fig.}{Figs.}
\crefname{table}{Tab.}{Tabs.}
\crefname{equation}{Eq.}{Eqs.}

\DeclarePairedDelimiterX{\infdivx}[2]{(}{)}{%
  #1\;\delimsize\|\;#2%
}

\ExplSyntaxOn
\newcommand\latinabbrev[1]{
  \peek_meaning:NTF . {%
    \textit{#1}\@}%
  { \peek_catcode:NTF a {%
      \textit{#1}.\@ }%
    {\textit{#1}.\@}}}
\ExplSyntaxOff

\def \MethodName {Proxy Policy Steering\xspace}
\def \MethodAcronym {PPS\xspace}
\def \NickName {PPS\xspace}

\title{\MethodName}

\definecolor{linkpink}{RGB}{237,0,140}

\author{
    \textbf{
    Chuanruo Ning$^{*}$
    \quad
    Tianrui Wang$^{*}$
    \quad
    Wei-Chiu Ma$^{\dagger}$
    \quad
    Kuan Fang$^{\dagger}$
    } \\[1mm]
    Cornell University \\[1mm]
    {\href{https://ppsteering.github.io}
      {\textcolor{linkpink}{\texttt{https://ppsteering.github.io}}}
    }
}

\begin{document}
\maketitle
\vspace{-6mm}
\let\thefootnote\relax\footnotetext{$^{*}$Equal contribution. $^{\dagger}$Equal advising.}


\begin{abstract}
\hl{Generalist robot policies} carry broad manipulation priors from large-scale data, but specializing them to a new task remains the deployment bottleneck. This requires \hl{eliciting} task-specific behavior from limited demonstrations without \hl{degrading} their broad capabilities. We introduce \MethodName{} (\MethodAcronym), \hl{an inference-time adaptation method that resolves this challenge by training two lightweight proxy policies whose calibrated velocity-space difference steers the frozen base sampler.} A \emph{reference proxy} models the frozen base's behavior on target-task observations, and a \emph{task proxy}, initialized from the reference, captures how this behavior changes under task supervision. Their difference forms a calibrated velocity-space residual that steers the frozen base sampler at every denoising step. We identify the conditions under which this residual isolates the change induced by task supervision, and validate them empirically. \hl{Because the base is never directly modified, its broad capabilities remain available at inference, including behaviors such as recovery from failure that the demonstrations themselves do not exercise. Adaptation requires only forward velocity predictions from the base, making \MethodAcronym{} lightweight to train and applicable even without access to the base's parameters.} On 8 real-world and 4 simulation manipulation tasks, \MethodAcronym{} lifts the state-of-the-art $\pi_{0.5}$ base policy by 55\% absolute success rate on average, with \hl{zero-to-one} gains on tasks the base never solves, while preserving the base's broad capabilities. \MethodAcronym{} outperforms LoRA fine-tuning, from-scratch specialists, residual policies, and prior inference-time steering methods.
\end{abstract}

\keywords{Policy Steering, VLA Models, Flow-Matching Policy} 

\section{Introduction}

\begin{figure*}[t]
    \centering
    %
    %
    %
    \includegraphics[width=\linewidth]{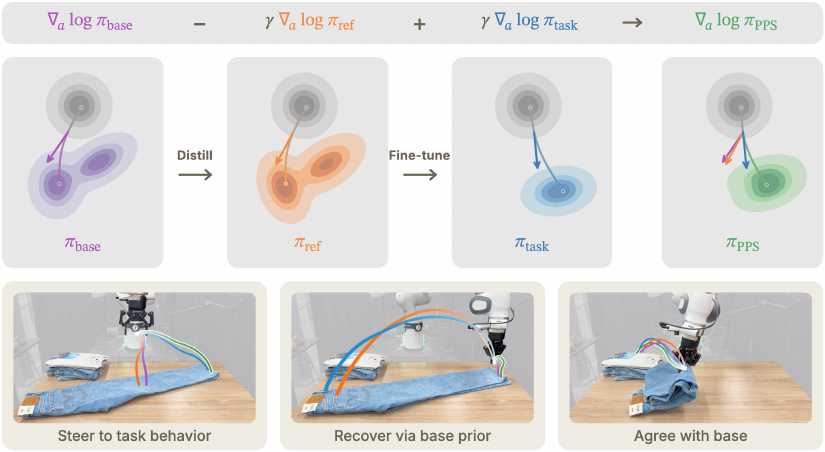}
    \vspace{-3mm}
    \caption{\hl{\textbf{\MethodName (\NickName).} We distill the frozen \textcolor{base}{base policy} $\pi_{\mathrm{base}}$ into a lightweight \textcolor{ref}{reference proxy} $\pi_{\mathrm{ref}}$ on target-task observations, then fine-tune $\pi_{\mathrm{ref}}$ into a \textcolor{task}{task proxy} $\pi_{\mathrm{task}}$ on demonstrations. Their \textit{velocity-space residual} steers the frozen base at every denoising step, yielding the \textcolor{pps}{PPS} action that injects task-specific behavior while preserving the base's general priors. 
    The bottom panels show three representative cases: a state where the base needs task guidance (residual steers toward the task behavior), a state not covered by task demonstrations (residual cancels, base prior recovers), and a state where the proxies agree with the base (residual vanishes).
    }
    }
    \vspace{-3mm}
    \label{fig:teaser}
\end{figure*}

\hl{Generalist robot policies trained on large-scale data, particularly Vision-Language-Action (VLA) models~\cite{brohan2022rt, black2024pi_0, team2025gemini, ye2026world}, bring broad manipulation competence into a single policy that generalizes across diverse embodiments, scenes, and instructions.} Yet specializing them to a particular downstream task remains a bottleneck in real-world deployment~\cite{intelligence2025pi, zhao2025mos}, since the deployment distribution rarely matches the pretraining data, and task adaptation typically operates on only a handful of demonstrations. This presents a fundamental tension between acquiring task-specific behavior from limited data and preserving the broad capabilities these generalist policies were trained to provide.

Existing adaptation methods either modify the base model's parameters or steer its sampling process, and both paths have notable limitations. Modifying the base through \emph{fine-tuning}~\cite{kim2025fine, driess2025knowledge} is effective when demonstrations are abundant but compromises the base's broad capabilities in the limited-demonstration regime~\cite{dey2025revla, yadav2025robust, shuttleworth2026lora}, \hl{degrading} behaviors such as recovery from failure that the demonstrations do not exercise. Fine-tuning is also computationally and memory intensive at VLA scale, even with low-rank adapters~\cite{hu2022lora}, and requires access to model weights, ruling out closed-source or API-only base policies.
\emph{Steering}-based methods~\cite{wang2025inference, wu2025foresight, wang2025latent} keep the base intact and adjust the action sampling process instead, whether by reshaping the noise distribution that drives the sampler~\cite{wagenmaker2025steering} or by guiding the denoising trajectory with auxiliary models~\cite{du2025dynaguide}. \hl{But each reweights behaviors the base already produces on its own, and cannot reach task-specific behaviors that the base assigns negligible probability to.} How to steer a frozen base \hl{toward behaviors outside what it samples on its own} while preserving the breadth of priors it carries remains an open question.

We introduce \MethodName~(\MethodAcronym), an inference-time adaptation method for frozen flow-matching VLAs \hl{that resolves the open question above by combining the strengths of both paradigms}. Our key insight is that effective adaptation should neither steer only within the base's existing distribution nor replace the base with a narrowly trained specialist. Instead, it should \emph{estimate the direction in which task supervision would shift the base's action distribution}, and apply that direction to the base sampler without modifying any weights, \hl{so that the base's broad priors remain intact}.

To estimate this direction in practice, \MethodAcronym{} trains two lightweight \emph{proxy policies} and combines them in velocity space, as illustrated in~\Cref{fig:teaser}. A \emph{reference proxy} is distilled from the frozen base on target-task observations, capturing the base's behavior on the states the sampler will visit at inference. A \emph{task proxy} is initialized from the reference and fine-tuned on demonstrations, capturing how the base's behavior would change under task supervision. \hl{How the proxies are constructed determines whether their difference is meaningful. A small proxy trained on limited demonstrations does not faithfully reproduce the base, and its approximation error is comparable in magnitude to the task-induced change we want to extract, so any residual built from a single proxy, or from a proxy and the base, retains this error. Because the reference is distilled on the same states and the task proxy is initialized from it, the two proxies share the error, and their difference isolates the change induced by task supervision. At inference, the difference between the two proxies forms a velocity-space residual that we add to the base velocity field at every denoising step. When demonstrations reveal new task-relevant actions, the residual steers the base toward the task proxy. When the proxies agree, it cancels and the frozen base retains control, preserving broadly useful priors such as failure recovery.}

We evaluate \MethodAcronym{} across 8 real-world and 4 simulation manipulation tasks spanning both in-distribution and out-of-distribution settings. \MethodAcronym{} lifts the average success rate of the state-of-the-art $\pi_{0.5}$ base policy by 55\% absolute, with \hl{zero-to-one gains on 4 real-world tasks the base never solves}. The same procedure applied to $\pi_0$ raises its real-world success rate from 5\% to 55\%, suggesting that the velocity-space residual is a property of flow-matching steering rather than a $\pi_{0.5}$-specific trick. \MethodAcronym{} also outperforms LoRA fine-tuning, from-scratch specialists, residual policies, and existing inference-time steering methods, while preserving the base's recovery behavior throughout deployment.


\section{Related Work}
\label{sec:related}

\hl{Pretrained generalist robot policies are typically adapted to downstream tasks by fine-tuning the base weights, through full fine-tuning, LoRA~\citep{hu2022lora}, parameter merging~\citep{yadav2025robust}, or RL fine-tuning~\citep{zhang2025reinflow,pfrommer2025reinforcement,chen2025conrft,peng2026ignore}. Fine-tuning reshapes the base's action distribution toward the demonstrations, which is expensive at VLA scale and overwrites manipulation priors the demonstrations do not exercise~\citep{dey2025revla,shuttleworth2026lora}, including robust grasping, contact-rich interaction, and recovery from failure.}

\hl{A second family leaves the base unmodified and intervenes during sampling, at the input noise, the sampled output, or the denoising trajectory. DSRL~\citep{wagenmaker2025steering} optimizes the initial noise with RL, which requires non-zero base success or a large offline dataset and gains little on tasks the base can hardly solve. Residual policy learning~\citep{yuan2025policy,liu2026self,julg2025refined,xiao2025self} and candidate selection~\citep{nakamoto2024steering,wu2025foresight,li2026towards} act on the sampled output, after the base has committed to an action mode. RFS~\citep{su2026rfs} combines noise optimization with a post-hoc residual. DynaGuide~\citep{du2025dynaguide} guides the trajectory with a learned dynamics model, assuming in-distribution dynamics. \MethodAcronym{} instead applies a density-ratio correction in velocity space at every denoising step, tilting the base toward task-relevant modes before the sampler commits.}

\hl{\MethodAcronym{} adapts the proxy tuning principle from language models, where small expert and anti-expert models steer a frozen base at decoding time~\citep{liu2021dexperts,liu2024tuning,li2023contrastive,mitchell2023emulator}. Proxy tuning combines categorical logits, and its direct continuous-control analog corrects only the final clean action. We show that under the linear flow-matching path, the density-ratio composition takes an additive velocity-space form at every noise level, which lets \MethodAcronym{} reshape the mode the sampler enters. This places \MethodAcronym{} within score-space steering for diffusion and energy-based samplers. Classifier-free guidance~\citep{ho2022classifier,karras2024guiding} amplifies a conditioning signal learned during pretraining by residualizing one model's conditional and unconditional outputs. Compositional methods~\citep{du2020compositional,liu2022compositional,wang2024poco} add the scores of independently trained models, where naive composition can sample from the wrong distribution~\citep{du2023reduce}. \MethodAcronym{} inherits the additive form from this lineage. What it adds is the construction of the guidance pair: a reference proxy distilled from the base on the states the sampler will visit, and a task proxy fine-tuned from it, so that the residual isolates the change induced by task supervision rather than an arbitrary difference between two networks.}

\section{Preliminaries}
\hl{Generalist robot policies are trained on large-scale, diverse data to produce actions across many tasks, scenes, and embodiments. We focus on Vision-Language-Action (VLA) models, a family of such policies that map observations, proprioceptive states, and language instructions to robot actions.}
The conditional action distribution induced by these policies is typically denoted as $\pi_{\mathrm{base}}(a \mid o,l)$, where $o$ denotes the observation, including states, and $l$ is the language instruction. 

Rather than directly predicting a single action, state-of-the-art VLAs often use flow-matching action heads~\cite{black2024pi_0, intelligence2025pi, intelligence2026pi}, which generate actions by transforming Gaussian noise into clean actions through a learned velocity field. 
Concretely, the action head predicts a velocity field
$v_{\mathrm{base}}(x_k,k,o,l)$ over intermediate action states $x_k$ along the flow path:
\begin{equation}
    x_k = k\epsilon + (1-k)a, \quad \epsilon \sim \mathcal{N}(0,I), \quad k \in [0,1],
\end{equation}
where $k=1$ corresponds to noise and $k=0$ to a clean action.
At inference time, the model starts from $x_1 \sim \mathcal{N}(0,I)$ and integrates the predicted velocity field to obtain the final action.


Given a small target-task demonstration set $\mathcal{D}_{\mathcal{T}} = \{(o, a, l)\}$, our goal is to adapt the base VLA's action distribution on this task while preserving its broad pretraining priors, without modifying base parameters. \hl{We treat the base VLA as a query-only model whose parameters and gradients are unavailable, providing only forward velocity predictions.}

\section{\MethodName{}}
\label{ref:method}

We present \MethodAcronym{}, an adaptation method that steers a \hl{frozen generalist robot policy} toward task-successful actions without modifying its weights\hl{, developed here for VLAs with flow-matching action heads}. \Cref{sec:distribution} formulates the adaptation as a relative density correction to the base policy. \Cref{sec:method_flow-matching_proxy_guidance} shows that this correction admits an equivalent velocity-space implementation for flow-matching VLAs, which is what the inference-time interface allows. \Cref{sec:learning} describes the two-stage proxy training that makes the velocity residual a meaningful task-specific signal.

\subsection{Adapting Generalist Policies by Proxy}
\label{sec:distribution}
We frame adaptation as a relative density correction to the base policy. Inspired by product-of-experts (PoE) models~\citep{hinton2002training}, we define the adapted policy as a product between the base and a relative task correction:
\begin{equation}
    \tilde{\pi}(a \mid o,l) \propto
    \pi_{\mathrm{base}}(a \mid o,l)
    \left[
    \frac{\pi_{\mathrm{task}}(a \mid o,l)}
         {\pi_{\mathrm{ref}}(a \mid o,l)}
    \right]^{\gamma},
    \quad \gamma \geq 0.
    \label{eq:proxy_ratio}
\end{equation}
Here, $\pi_{\mathrm{ref}}$ represents the behavior before task supervision, $\pi_{\mathrm{task}}$ represents the behavior after task supervision, and $\gamma$ controls the strength of the correction. 

The formulation can be understood as approximating the adaptation direction that would be induced by fine-tuning. 
Specifically, if model weight access, computation, and data were unconstrained, one could directly fine-tune the full base VLA on $\mathcal{D}_{\mathcal T}$ to obtain a task-adapted policy $\pi_{\mathrm{task}}$. 
If $\pi_{\mathrm{ref}}$ exactly matched $\pi_{\mathrm{base}}$, then Eq.~\eqref{eq:proxy_ratio} would reduce to
\begin{equation}
    \tilde{\pi}(a \mid o,l) \propto
    \pi_{\mathrm{base}}(a \mid o,l)^{1-\gamma}
    \pi_{\mathrm{task}}(a \mid o,l)^{\gamma},
\end{equation}
which interpolates between the frozen base policy and the task-adapted policy. 
When $\gamma=1$, the base and reference terms cancel, and the adapted policy follows the task policy; when $\gamma=0$, the adapted policy falls back to the base policy.

In practice, however, directly fine-tuning the full VLA is often infeasible or undesirable. 
\MethodAcronym{} thus approximates this relative correction using lightweight proxy policies.
The \emph{reference proxy} $\pi_{\mathrm{ref}}$ is trained to capture the frozen base policy's behavior on target-task observations, while the \emph{task proxy} $\pi_{\mathrm{task}}$ is trained from demonstrations to capture how this behavior changes under task supervision. 
Their ratio provides a task-induced correction: actions made more likely by task supervision are amplified, while actions already explained by the base policy are preserved. 
This allows \MethodAcronym{} to inject task-specific expertise while retaining  useful priors of the frozen VLA, such as recovery from failures.

\subsection{Flow-Matching Proxy Guidance}
\label{sec:method_flow-matching_proxy_guidance}
The relative-correction view in Sec.~\ref{sec:distribution} provides a distribution-level formulation of proxy-based VLA adaptation. 
The remaining question is how to implement this correction for flow-matching VLAs, whose inference-time interface is a velocity field rather than a normalized action density.



To instantiate the relative correction in Eq.~\eqref{eq:proxy_ratio}, we need to express the task-induced change in the same space as the base sampler. 
Since the sampler is governed by a velocity field, \MethodAcronym{} represents this change as the difference between the task and reference proxy velocities. 
Intuitively, $v_{\mathrm{task}}-v_{\mathrm{ref}}$ captures how task supervision changes the generation dynamics beyond what is already explained by the base behavior. 
We therefore steer the base velocity field by adding this residual:
\begin{equation}
v_{\mathrm{\MethodAcronym{}}}(x,k,o,l)
=
v_{\mathrm{base}}(x,k,o,l)
+
\gamma
\left[
v_{\mathrm{task}}(x,k,o)
-
v_{\mathrm{ref}}(x,k,o)
\right],
\label{eq:guided_velocity}
\end{equation}
where $\gamma$ controls the steering strength. We drop the language input for the proxy policies since we assume a single-task setting throughout.
This velocity residual is not an ad hoc choice. Under a shared flow-matching schedule, applying the correction in \eqref{eq:proxy_ratio} to the noised marginals at each noise level\hl{, as in classifier-free guidance,} is \emph{equivalent} to the additive residual in \eqref{eq:guided_velocity}. We \hl{provide} the assumptions and derivation in \Cref{app:velocity-derivation}.

This guidance rule mirrors the cancellation property of the distribution-level correction. When task supervision changes the proxy velocity, the residual $v_{\mathrm{task}}-v_{\mathrm{ref}}$ steers the sampler toward the task proxy. When the task and reference proxies agree, the residual cancels out and the frozen base policy remains in control. 
At inference time, as shown in~\Cref{alg:inference}, \MethodAcronym{} samples $x_1 \sim \mathcal{N}(0,I)$ and integrates the guided velocity field $v_{\mathrm{\MethodAcronym{}}}$ from $k=1$ to $k=0$ using the same scheduler as the base policy. When $\gamma=0$, this exactly recovers the frozen base sampler; larger values increasingly trust the proxy residual and impose stronger task-specific steering.

Importantly, because the proxy residual is added in the shared velocity space, the proxies may condition on modalities the base does not, such as point clouds alongside RGB. The only requirement is that the base and proxies share the same action representation $x_k$ and flow-matching schedule.

\begin{figure*}[t]
\centering

\begin{minipage}[t]{0.49\textwidth}
\begin{algorithm}[H]
\caption{\MethodAcronym Training}
\label{alg:training}
\begin{algorithmic}[1]

\State Train \(v_{\mathrm{ref}}\) by distilling base denoising trajectories
on \((o, l) \sim \mathcal{D}_{\mathcal T}\) with \(\mathcal{L}_{\mathrm{ref}}\) in \eqref{eq:ref-loss}.

\State Initialize \(v_{\mathrm{task}} \leftarrow v_{\mathrm{ref}}\).

\State Train \(v_{\mathrm{task}}\) on demonstrations \((o, a) \sim \mathcal{D}_{\mathcal T}\)
with \(\mathcal{L}_{\mathrm{task}}\) in \eqref{eq:task-loss}.

\State \Return \(v_{\mathrm{ref}}, v_{\mathrm{task}}\)
\end{algorithmic}
\end{algorithm}
\end{minipage}
\hfill
\begin{minipage}[t]{0.49\textwidth}
\begin{algorithm}[H]
\caption{\MethodAcronym Inference}
\label{alg:inference}
\begin{algorithmic}[1]

\State Sample \(x_{k_1}\sim\mathcal{N}(0,I)\).

\For{\(i=1,\ldots,K-1\)}
    \State Compute \(\tilde{v}_i=v_{\mathrm{\MethodAcronym{}}}(x_{k_i},k_i,o,l)\)
    by \eqref{eq:guided_velocity}.

    \State Update \(x_{k_{i+1}}\leftarrow x_{k_i}+(k_{i+1}-k_i)\tilde{v}_i\).
\EndFor

\State \Return \(a=x_{k_K}\)
\end{algorithmic}
\end{algorithm}
\end{minipage}

\end{figure*}

\subsection{Learning Proxy Policies}
\label{sec:learning}

The guidance rule in Eq.~\eqref{eq:guided_velocity} is meaningful only if the residual $v_{\mathrm{task}}-v_{\mathrm{ref}}$ captures the change induced by task supervision, rather than arbitrary differences between two independently trained models. \MethodAcronym{} therefore trains the proxies in two stages: first, an on-policy reference distillation stage that aligns the reference proxy with the frozen base sampler; second, a task specialization stage that adapts the proxy to demonstrations, as shown in~\Cref{alg:training}.

\paragraph{On-policy reference distillation.}
We first train the reference proxy to mimic the frozen base model on the states where inference-time guidance will be applied. For each target-task observation and instruction $(o,l)$, we roll out the frozen base sampler from noise and collect intermediate action states $x_{k_i}$ along its generation trajectory. The reference proxy is then trained by velocity distillation:
\begin{equation}\label{eq:ref-loss} \mathcal{L}_{\mathrm{ref}}=\mathbb{E}_{(o,l)\sim\mathcal{D}_{\mathcal T},\,\epsilon\sim\mathcal{N}(0,I)}\sum_{i=1}^{K}\left\|v_{\mathrm{ref}}(x_{k_i},k_i,o)-v_{\mathrm{base}}(x_{k_i},k_i,o,l)\right\|^2. \end{equation}
This on-policy distillation ensures that the reference proxy matches the base behavior in the region where the proxy residual will later be applied.

\paragraph{Task specialization.}
We then initialize the task proxy from the reference proxy and train it on task demonstrations with the standard flow-matching objective:
\begin{equation}
\label{eq:task-loss} \mathcal{L}_{\mathrm{task}}=\mathbb{E}_{(o,a)\sim\mathcal{D}_{\mathcal T},\,\epsilon\sim\mathcal{N}(0,I),\,k\sim\mathrm{Beta}(1.5,1)}\left\|v_{\mathrm{task}}(x_k,k,o)-(\epsilon-a)\right\|^2,\quad x_k=k\epsilon+(1-k)a. \end{equation}
Because the task proxy is initialized from the reference proxy, the two proxies share architecture, initialization, and approximation errors before task supervision. Their difference therefore cleanly isolates the task-induced change, rather than artifacts from independently trained networks. \hl{A small proxy trained on limited demonstrations carries approximation error comparable to the task-induced change itself, and training the proxies independently, or replacing the reference with the base, leaves this error in the residual. Fine-tuning from the reference keeps the two errors close, since limited demonstrations move the network only along the directions they support, which is why proxy tuning pairs an expert with a matched anti-expert rather than with the base~\citep{liu2024tuning}.}

\section{Experiments}
To evaluate and better understand \MethodAcronym{}, we design our experiments around three questions: (1) Does \MethodAcronym{} improve VLA performance on downstream tasks? (2) Does \MethodAcronym{} preserve the general manipulation priors of VLAs while eliciting task-specific behaviors? (3) What drives the effectiveness of \MethodAcronym{}, and which design choices are necessary?


\subsection{Experimental Setup}
\noindent\textbf{Tasks.}
We evaluate \MethodAcronym on 8 real-world tasks and 4 simulation tasks, as shown in~\Cref{fig:tasks_results}. These tasks include both settings that resemble the VLA's training distribution, such as {tissue wiping} and {lever pressing}, and challenging (out-of-distribution) tasks where the VLA rarely succeeds, such as {coffee brewing}, {jeans folding}, and {flower arrangement}.
We collect 50 demonstrations for each task.
We further evaluate \MethodAcronym on cross-modality steering tasks, where the proxy policies are trained with additional modality inputs such as point clouds or audio to steer an RGB-only base policy. Please see \Cref{sec:multimodal} for cross-modality steering results.

\noindent\textbf{Baselines.}
\textbf{$\boldsymbol{\pi_{0.5}}$}~\cite{intelligence2504pi0} is a state-of-the-art VLA pretrained on diverse robotics data and fine-tuned on the DROID dataset~\cite{khazatsky2024droid}. We use it as our frozen base model. \textbf{Specialist} is a flow-matching policy trained from scratch on the task demonstrations, with the same architecture as our proxy models.
\textbf{Residual} uses the same architecture as the Specialist policy, but additionally takes the base policy's predicted action as input and learns a correction on top of it.
\textbf{DSRL}~\cite{wagenmaker2025steering} adapts the base model by training a policy over its initial noise distribution via offline or online RL. We compare with its offline Noise-Aliased variant. 
\textbf{LoRA}~\cite{hu2022lora} fine-tunes the base model with low-rank adapters, using the same trainable-parameter budget as \MethodAcronym{}. It represents the standard weight-fine-tuning approach.


\noindent\textbf{Metrics.}
We report task success rates with an action step budget of 1200. We run 10 rollouts per task per method in the real world and 100 in simulation, both with randomized initial configurations.

\noindent\textbf{Implementation details.}
Each proxy policy has 32M parameters (\emph{i.e.}, $1\%$ of the base model size), using DINOv3~\cite{simeoni2025dinov3} as the visual encoder and an action expert modified from $\pi_{0.5}$~\cite{intelligence2504pi0}. We train the reference and task proxies for 20,000 steps each, with a batch size of 64; same for baselines.

\begin{figure*}[t]
    \centering
    \includegraphics[width=\linewidth]{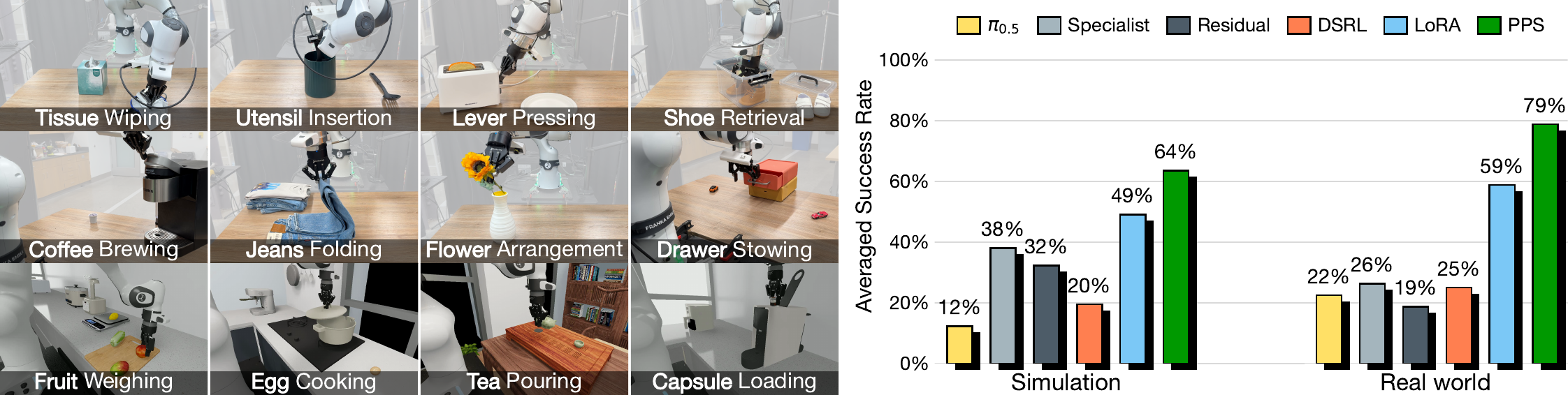}
    \caption{\textbf{Tasks and results.} \textbf{Left:} the manipulation tasks used in our evaluation: 8 real-world tasks (top two rows) and 4 simulation tasks (bottom row), covering both in-distribution (e.g., Tissue, Fruit) and out-of-distribution (e.g., Jeans, Capsule) settings for the base VLA. \textbf{Right:} success rates averaged over tasks in the simulation and real world. Per-task numbers are reported in \Cref{sec:appendix_per_task}.}
    \label{fig:tasks_results}
\end{figure*} 

\subsection{Comparative Results}
\label{sec:comparative}

As shown in \Cref{fig:tasks_results} (right), \MethodAcronym{} substantially improves the success rate of $\boldsymbol{\pi_{0.5}}$ on downstream tasks, achieving a 55\% absolute gain averaged across all 12 tasks (see \Cref{sec:appendix_per_task} for the per-task breakdown). By steering the denoising trajectory with a velocity residual learned from demonstrations, \MethodAcronym{} enables the base VLA to solve tasks that it could not complete on its own.

\hl{In comparison, \textbf{DSRL} improves over the base where it already has non-zero success (tissue wiping) but not where it has none (coffee brewing), since RL-based adaptation needs either a large offline dataset or non-zero base success to bootstrap. \MethodAcronym{} also significantly outperforms \textbf{Specialist}, a from-scratch policy trained on the same demonstrations. A lightweight model trained on 50 demonstrations cannot form a robust policy on its own, yet the same architecture and data suffice as guidance that steers the VLA toward the demonstrated modes~(\Cref{eq:proxy_ratio}) and elicits its broad priors. \textbf{Residual} learning underperforms even Specialist, likely because the residual is applied only after the base has sampled a clean action not well aligned with the task. Most importantly, \MethodAcronym{} outperforms \textbf{LoRA} under the same trainable-parameter budget. LoRA reshapes the action distribution to fit the demonstrations and overwrites pretrained priors, whereas \MethodAcronym{} injects the task signal through velocity steering and preserves them. \Cref{fig:robustness} makes this concrete: \MethodAcronym{} recovers from failures absent from the demonstrations, whereas LoRA overfits to them. Under mid-rollout disturbances, \MethodAcronym{} loses only $7.8$ points of success rate, against $15.8$ for LoRA and $26.1$ without the reference proxy (\Cref{sec:appendix_perturbation}).}

\hl{\MethodAcronym{} is also efficient and transfers across base policies. It trains $5\times$ faster than LoRA (2.9h vs.\ 16.3h on a single A6000) and adds only 6\,ms of inference latency per action chunk, since the two 32M proxies run in parallel with the frozen base (\Cref{sec:appendix_efficiency}). Applying the same procedure to $\pi_0$ raises its real-world success rate from 5\% to 55\%, with zero-to-one gains on all six tasks $\pi_0$ never solves (\Cref{sec:appendix_pi0}), indicating that the velocity-space residual is not specific to $\pi_{0.5}$.}




\begin{figure*}[t]
    \centering
    \includegraphics[width=\linewidth]{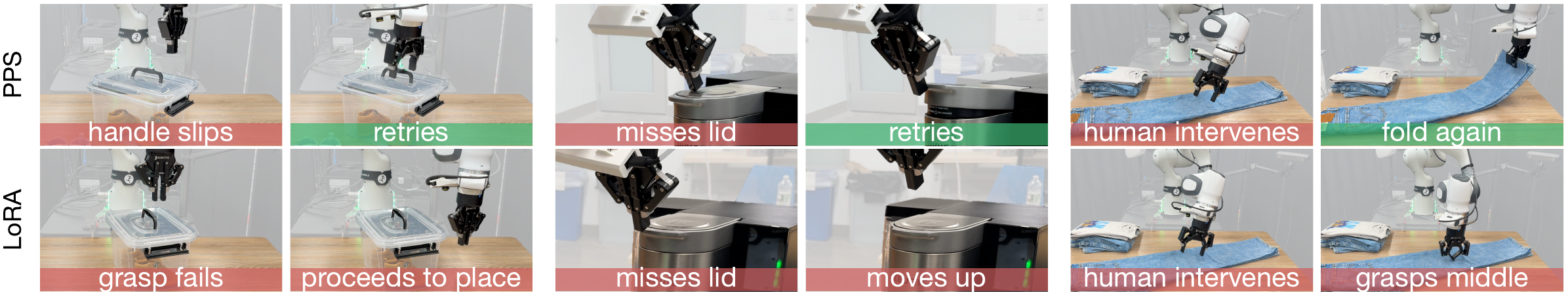}
    \vspace{-4mm}
    \caption{\textbf{Recovery behaviors.} \MethodAcronym recovers from failures even though no recovery trajectories appear in the demonstrations, whereas LoRA fails the task due to overfitting to the demonstrations. See \Cref{sec:appendix_perturbation} for quantitative results under perturbations.}
    \label{fig:robustness}
\end{figure*}



\subsection{Analysis on \MethodAcronym}
\label{sec:ablation}

We analyze \MethodAcronym along two axes: the steering strength $\gamma$ that controls how strongly the base sampler is tilted toward the proxy ratio, and the calibration of the residual itself.

\paragraph{Steering strength.} \Cref{fig:gamma} sweeps $\gamma$ on the simulated tasks. Setting $\gamma = 0$ recovers the unsteered base sampler, and larger values impose stronger task-specific steering---matching the trade-off characterized in \Cref{sec:method_flow-matching_proxy_guidance}. The optimal $\gamma$ in the 50-demonstration regime lies inside $(0.4, 0.6)$, balancing the base's general priors against the task-specific modes amplified by the proxy ratio. We fix $\gamma=0.4$ for every task in our experiments, without per-task tuning.

\paragraph{Proxy training recipe.} The velocity residual is useful only if it isolates the change induced by task supervision. Specifically, Eq.~\eqref{eq:guided_velocity} requires two conditions:
(C1) the reference proxy matches the base velocity on task-relevant observations, and (C2) the task and reference proxies share approximation errors and randomness. \Cref{fig:ablation_avg} ablates the design choices that enforce these conditions.
\textbf{w/o\ ref} replaces $v_{\mathrm{ref}}$ with the frozen base $v_{\mathrm{base}}$. In this case, C1 holds trivially, but C2 is broken. The small task proxy carries architecture-specific approximation errors~\cite{liu2024tuning}, but the base does not share those errors. The residual mixes task signal with proxy-specific bias, and the success rate drops from 64\% to 55\%. \Cref{sec:appendix_cosine} quantifies this bias: the task--reference residual tracks a genuine fine-tuning shift far more closely than task--base.
\textbf{w/o\ vel} supervises the reference proxy on the base's clean predicted actions. Thus, the reference fits the base only at the clean endpoint, not along the denoising trajectory, violating C1. The residual then carries leftover base behavior that the guided sampler cannot cancel; success rate drops to 50\%.
\textbf{w/o\ tune} trains the task proxy from scratch instead of initializing it from the reference proxy. The two proxies then differ not only in task supervision but also in random initialization and optimization trajectory, so C2 fails and the residual no longer points cleanly along task-relevant directions; the success rate drops to 48\%.
Each ablation isolates one of the three design choices and confirms that the residual encodes a genuine task-relevant direction only when all three are in place.

\paragraph{Failure analysis.} We categorize failures into four root causes: \textbf{incorrect mode} such as trying to place the toy in the drawer before opening it, \textbf{imprecise execution} error due to imprecise low-level object interactions, \textbf{out-of-distribution (OOD) states} where the state drifts to configurations unseen at training time, and \textbf{other} failures such as hardware glitches and randomness. As shown in~\Cref{fig:failure}, \hl{compared to the frozen base $\pi_{0.5}$, \MethodAcronym{} cuts incorrect-mode failures by 93\%, reflecting the task-specific direction provided by the proxy residual. Compared to Specialist, \MethodAcronym{} cuts OOD-state failures by 87\%, reflecting the base prior that keeps the sampler inside well-supported state regions.}

\begin{figure*}[t]
    \centering
    \begin{subfigure}[b]{0.288\linewidth}
        \centering
        \includegraphics[width=\linewidth]{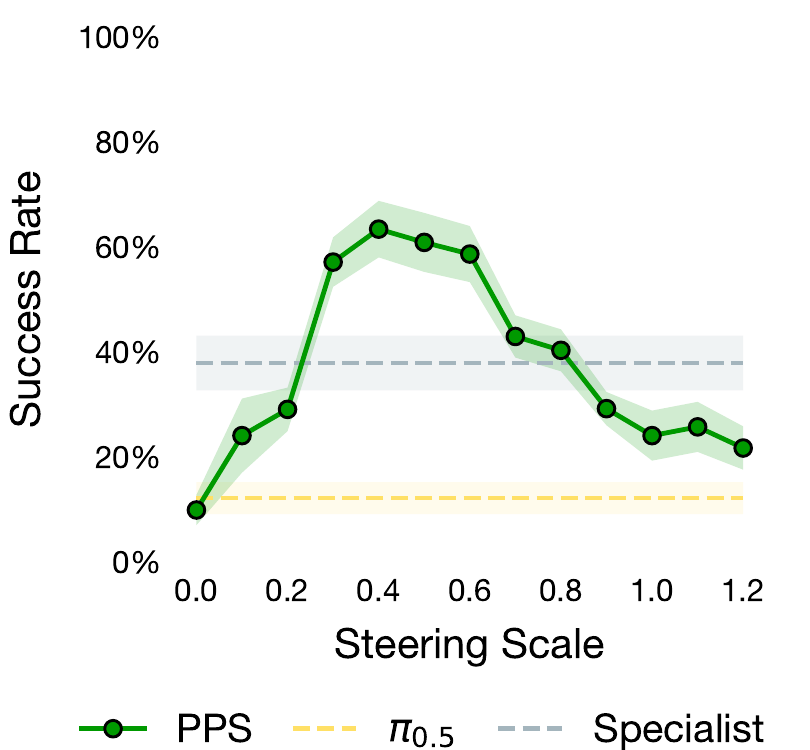}
        \caption{Sensitivity to $\gamma$}
        \label{fig:gamma}
    \end{subfigure}
    \hfill
    \begin{subfigure}[b]{0.288\linewidth}
        \centering
        \includegraphics[width=\linewidth]{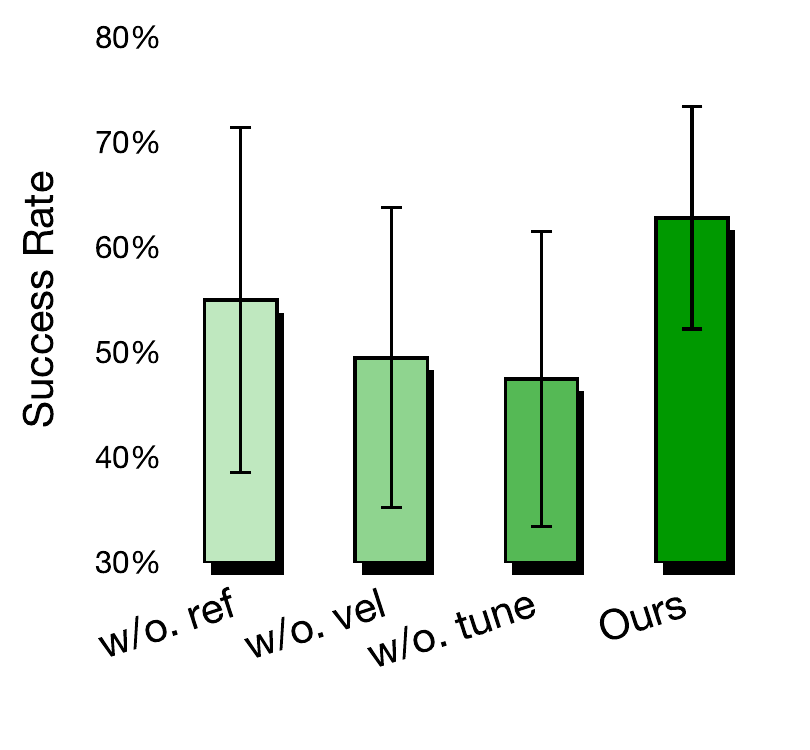}
        \caption{Ablation on proxy training}
        \label{fig:ablation_avg}
    \end{subfigure}
    \hfill
    \begin{subfigure}[b]{0.384\linewidth}
        \centering
        \includegraphics[width=\linewidth]{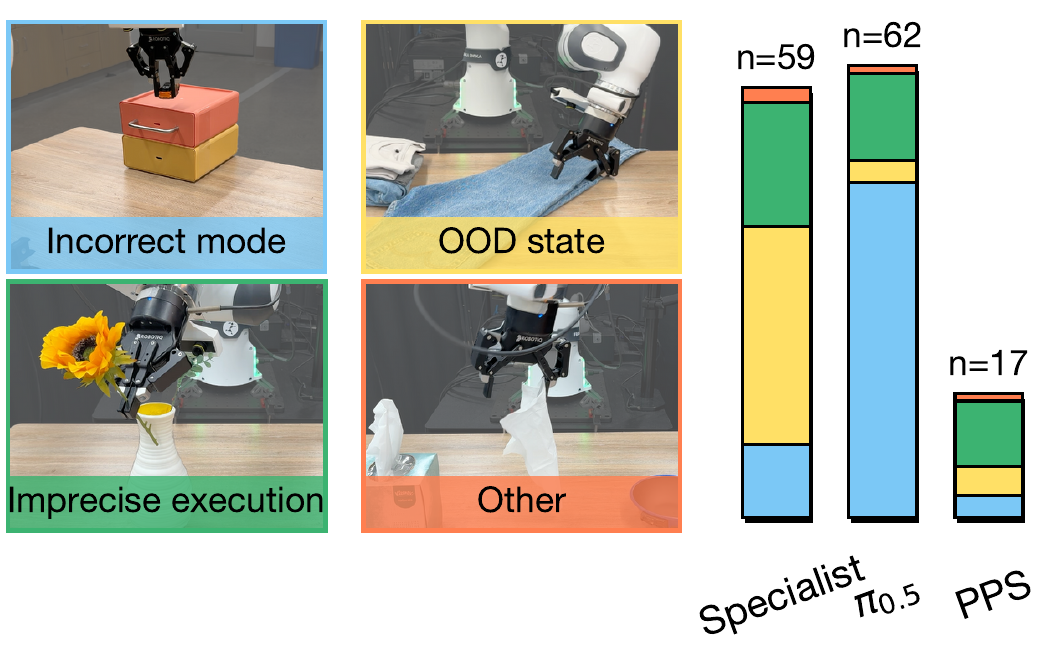}
        \caption{Failure analysis}
        \label{fig:failure}
    \end{subfigure}
    \caption{\textbf{Analysis.} Sensitivity to steering strength, training ablations, and failure-mode breakdown.}
    \label{fig:ablations}
\end{figure*}




\section{Limitations}
\hl{While \MethodAcronym{} offers an effective path to adapting frozen VLAs, the current study is limited in three respects. First, we study the single-task regime; the pipeline extends to multi-task setups by supplying the task identity as input, but how a shared proxy transfers manipulation primitives across tasks, environments, and embodiments remains open. Second, we validate only on flow-matching VLAs, although the principle extends to diffusion policies through the score-to-velocity correspondence and to Gaussian or categorical action heads through density-ratio or logit-space combinations. Third, \MethodAcronym{} relies on the base and the proxies playing complementary roles: the residual elicits task-specific behaviors the frozen base rarely samples on its own, while the base supplies broad manipulation priors, robustness, and recovery. Our results therefore depend on a strong base, and when the base is substantially weaker than the proxy, \MethodAcronym{} may offer less benefit than training a specialist from scratch.}

\section{Conclusions}

\hl{We introduce \MethodName{} (\MethodAcronym), an inference-time adaptation method that specializes a frozen flow-matching VLA from a small demonstration set through a calibrated velocity-space residual between two lightweight proxy policies. We show that under the linear flow-matching path this residual implements a density-ratio correction to the base at each noise level, and we identify the conditions under which it isolates the change induced by task supervision. Across 12 manipulation tasks, \MethodAcronym{} lifts the state-of-the-art $\pi_{0.5}$ base policy by 55\% absolute success rate on average and delivers zero-to-one gains on tasks the base never solves, while retaining recovery behaviors that LoRA fine-tuning degrades. The same recipe transfers across VLA families, lifting $\pi_0$ from 5\% to 55\%, and requires only forward velocity predictions from the base, making the method applicable even to closed-source or API-only base policies. More broadly, our results suggest that proxy-based inference-time adaptation, originally developed for discrete-token language models, admits a principled and calibrated analog for continuous-action generative policies, opening a general path for specializing large generalist policies without accessing or modifying their weights.}


\clearpage


\bibliography{main}  


\clearpage
\appendix

\section{Implementation Details}
\label{sec:exp_details}

This section provides the implementation details of \MethodAcronym{} and the baselines. \Cref{sec:appendix_network_details} describes the network architectures of our proxy and baseline models, \Cref{sec:appendix_control_details} the control and sampling configuration, and \Cref{sec:appendix_training_details} the training hyperparameters shared among all methods.

\subsection{Network Details}
\label{sec:appendix_network_details}
Both proxy policies (reference and task) share the same lightweight architecture: a DINOv3~\cite{simeoni2025dinov3} ViT-S/16 visual encoder followed by a small ($\sim$12M) Gemma transformer action expert. Following the $\pi_0$ convention, the action expert fuses the noised action with a sinusoidal time embedding (concatenation followed by an MLP) and regresses the flow-matching velocity. The visual encoder is trained jointly with the expert (not frozen), giving each proxy about 32M trainable parameters in total, roughly $1\%$ of the frozen base VLA $\pi_{0.5}$~\cite{intelligence2504pi0} ($\sim$3B parameters). The \textbf{Specialist} baseline uses this same architecture trained from scratch. The \textbf{Residual} baseline uses the same architecture and additionally encodes the base policy's predicted action through an extra action encoder, predicting a correction on top of it. \textbf{LoRA}~\cite{hu2022lora} inserts low-rank adapter matrices into both the VLM backbone and the action expert of the base model, leaving the remaining base weights frozen. \textbf{DSRL}~\cite{wagenmaker2025steering} follows the implementation of its original paper, training a policy over the base model's initial noise distribution; we use the offline Noise-Aliased variant.

\subsection{Control and Sampling Setup}
\label{sec:appendix_control_details}
All methods share the control and sampling configuration of the base policy, following the DROID setup~\cite{khazatsky2024droid} and the $\pi$ model configuration~\cite{black2024pi_0,intelligence2504pi0}. Policies run at a control frequency of 15\,Hz. Each forward pass uses 10 Euler denoising steps and predicts an action horizon of 15 steps, of which the first 8 are executed before replanning. \MethodAcronym{} applies the guided velocity in \eqref{eq:guided_velocity} at every one of the 10 denoising steps, and the proxies use the same schedule and action representation as the base, as required by the assumptions in \Cref{app:velocity-derivation}. Unless stated otherwise, we use a fixed steering strength $\gamma=0.4$ for every task, without per-task tuning.

\subsection{Training Hyperparameters}
\label{sec:appendix_training_details}
We train all proxy policies and baselines for 20{,}000 gradient steps with a batch size of 64, by which point every method has converged. Following the $\pi_0$ convention~\citep{black2024pi_0}, the flow-matching noise level $k$ in \eqref{eq:task-loss} is drawn from a shifted $\mathrm{Beta}(1.5,1)$ distribution rather than uniformly over $[0,1]$. The reference distillation loss in \eqref{eq:ref-loss} does not sample $k$: it is supervised at the noise levels the base sampler actually visits along its own denoising trajectory. The exception is \textbf{DSRL}, whose reinforcement-learning objective on a small dataset is unstable for longer training: we train it for 1{,}000 steps and use the last checkpoint before its RL loss diverges.

\section{Additional Details of Experimental Setup}

This section describes the tasks and evaluation methodology used throughout our experiments. \Cref{sec:appendix_task_details} lists the language instruction, success criterion, and demonstration collection for every real-world and simulation task, and \Cref{sec:appendix_evaluation} details the protocol used to measure success rates.

\subsection{Task Design}
\label{sec:appendix_task_details}
\Cref{tab:task_details} lists the language instruction and success criterion for each of the 8 real-world tasks and 4 simulation tasks used in our evaluation. The instruction column is the natural-language prompt given to the VLA at the start of every rollout. The success-criterion column describes the success condition.

For real-world tasks, our hardware setup follows the DROID dataset~\cite{khazatsky2024droid} with one third-person camera and one wrist camera.
We collect 50 demonstrations per task by teleoperation.
For simulated tasks, we use assets from ArtVIP~\cite{jin2025artvip} and the Synthesis asset pack to build our simulation environments. The 50 demonstrations are generated with MimicGen~\cite{mandlekar2023mimicgen} from 5 teleoperated source demonstrations.

\begin{table*}[t]
    \centering
    \caption{\textbf{Task design.} Language instruction and success criterion for each task. The top block lists the 8 real-world tasks; the bottom block lists the 4 simulation tasks.}
    \label{tab:task_details}
    \small
    \renewcommand{\arraystretch}{1.4}
    \begin{tabular}{@{}l p{0.40\linewidth} p{0.40\linewidth}@{}}
        \toprule
        \textbf{Task} & \textbf{Prompt} & \textbf{Success criterion} \\
        \midrule
        Tissue  & ``draw a tissue from the tissue box and wipe the pot'' & The tissue contacts the inside of the pot in a wiping motion. \\
        Utensil & ``put the utensils in the utensil holder'' & The utensils end up placed inside the holder. \\
        Lever   & ``push down the toaster lever'' & The lever is pushed all the way to the bottom. \\
        Shoes   & ``remove the lid of the box and take the shoes out of the box'' & The shoes are placed outside the box. \\
        Coffee  & ``open the coffee maker lid, put the pod inside the coffee maker, close the lid'' & The coffee pod is inside the coffee maker with the lid closed. \\
        Jeans   & ``fold the jeans and put them on the cloth pile'' & The folded jeans rest on top of the cloth pile. \\
        Flower  & ``insert the flower into the vase'' & The stem of the flower is inside the vase. \\
        Drawer  & ``open the drawer, put the toy cars in the drawer, close the drawer'' & The toy cars are inside the drawer and the drawer is closed. \\
        \midrule
        Fruit   & ``put the pear and apple on the electronic scale'' & Both the pear and the apple rest on the scale. \\
        Egg     & ``open the pot by the lid and put the egg in it'' & The egg is inside the pot after the lid has been removed. \\
        Tea     & ``pour the tea from the teapot into the cup'' & The teapot spout is tilted over the cup at a pouring angle. \\
        Capsule & ``open the coffee maker lid and put the pod in it'' & The pod is inside the coffee maker. \\
        \bottomrule
    \end{tabular}
\end{table*}

\subsection{Evaluation Protocol}
\label{sec:appendix_evaluation}
\paragraph{Real-world.} For each real-world task, we randomize the initial layout of the objects while keeping the robot's initial pose fixed, and run 10 rollouts per method per task.

\paragraph{Simulation.} For each simulation task, we run 100 rollouts per method per task using the same set of seeds across methods, so that every method is evaluated from identical initial configurations.
\section{Additional Results}
\label{sec:additional_results}

This section presents additional results that complement the main paper. We break the aggregate numbers down into per-task success rates (\Cref{sec:appendix_per_task}), analyze how \MethodAcronym{} scales with trainable parameters and demonstrations (\Cref{sec:appendix_scale}), and report training, inference, and parameter cost (\Cref{sec:appendix_efficiency}). We then show that \MethodAcronym{} transfers to other base policies (\Cref{sec:appendix_pi0}) and compare against DSRL on a standard benchmark (\Cref{sec:appendix_libero}). Finally, we study cross-modality steering (\Cref{sec:multimodal}), \MethodAcronym{}'s robustness to mid-rollout perturbations (\Cref{sec:appendix_perturbation}), the alignment of the learned velocity fields (\Cref{sec:appendix_cosine}), and the steering behavior itself (\Cref{sec:appendix_steering_vis}).

\subsection{Per-Task Success Rates}
\label{sec:appendix_per_task}
\Cref{fig:per_task_real,fig:per_task_sim} break down the averaged numbers reported in the main paper into per-task success rates. Each bar group is a single task; the rightmost group repeats the macro-average from the main figure for reference. The per-task view shows that \MethodAcronym{} can deliver zero-to-one improvements on the out-of-distribution real-world tasks (Coffee, Jeans, Flower, Drawer), where DSRL fails to gain any improvement. In simulation, \MethodAcronym{} similarly leads on every task.

\begin{figure*}[t]
    \centering
    \includegraphics[width=\linewidth]{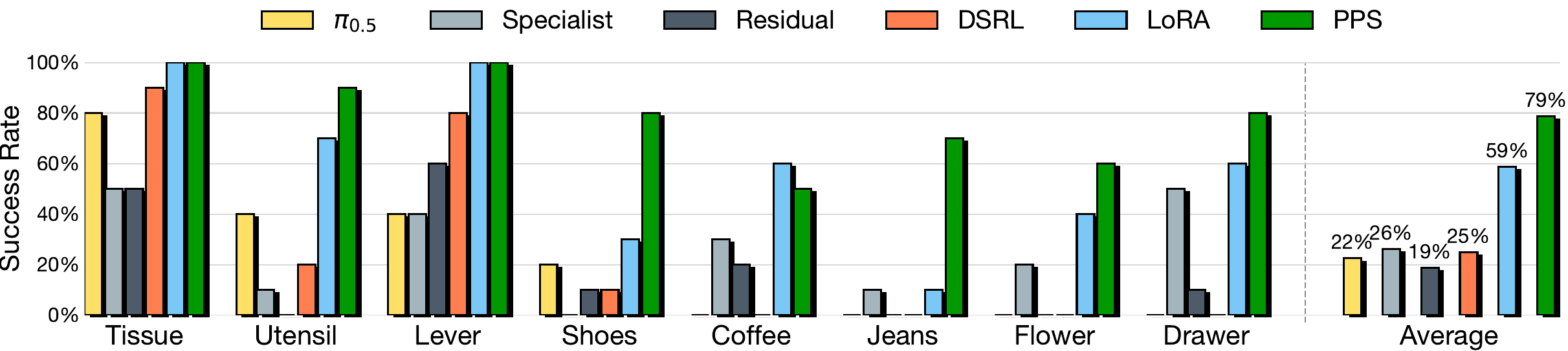}
    \vspace{-4mm}
    \caption{\textbf{Per-task real-world results.} Success rate of each method on the 8 real-world tasks.}
    \label{fig:per_task_real}
\end{figure*}

\begin{figure*}[t]
    \centering
    \includegraphics[width=\linewidth]{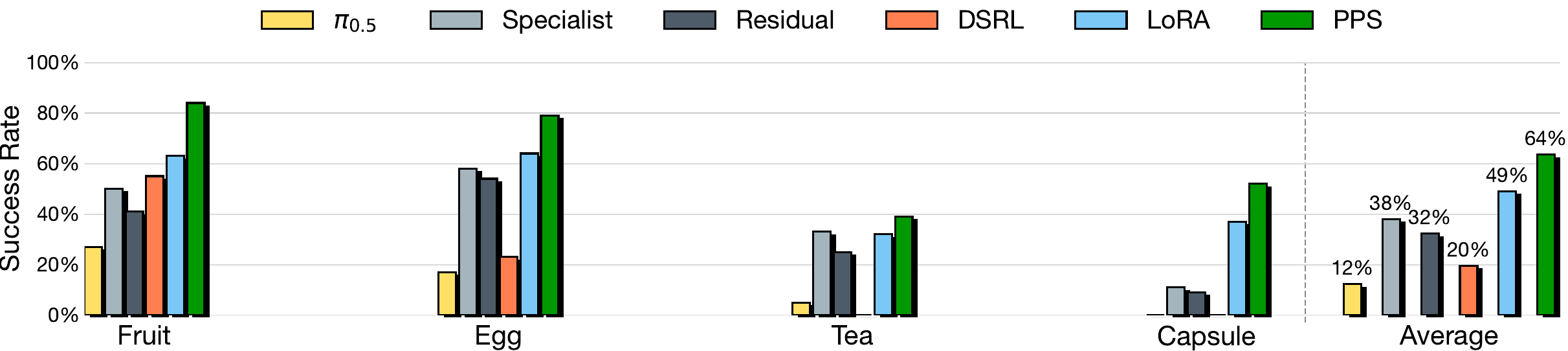}
    \vspace{-4mm}
    \caption{\textbf{Per-task simulation results.} Success rate of each method on the 4 simulation tasks.}
    \label{fig:per_task_sim}
\end{figure*}

\begin{figure*}[t]
    \centering
    \begin{subfigure}[b]{0.32\linewidth}
        \centering
        \includegraphics[width=\linewidth]{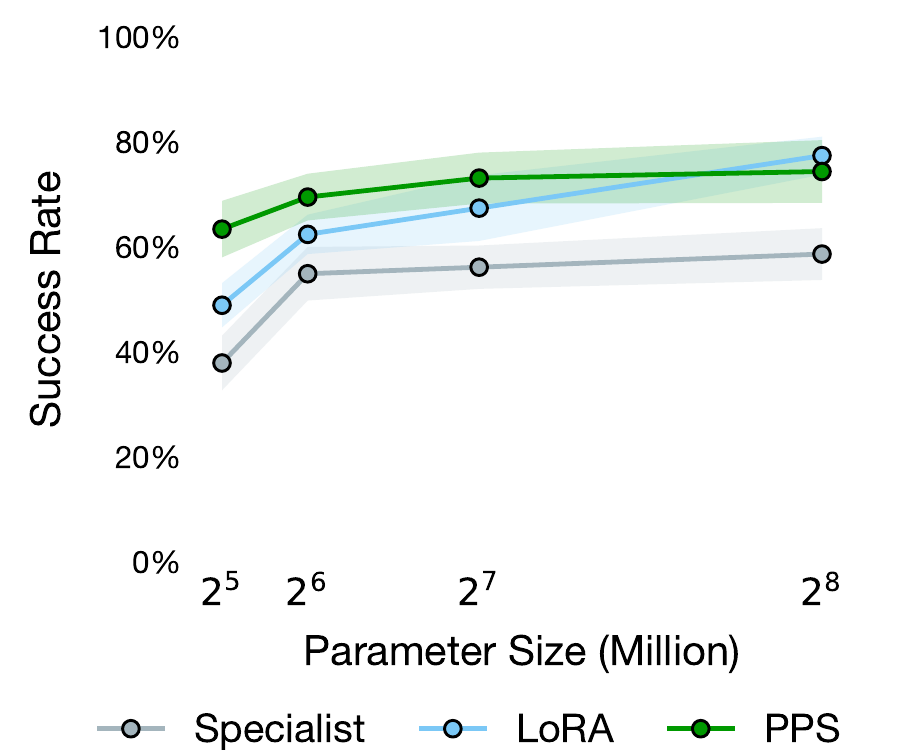}
        \caption{Parameter scaling on four tasks.}
        \label{fig:scale}
    \end{subfigure}
    \hfill
    \begin{subfigure}[b]{0.32\linewidth}
        \centering
        \includegraphics[width=\linewidth]{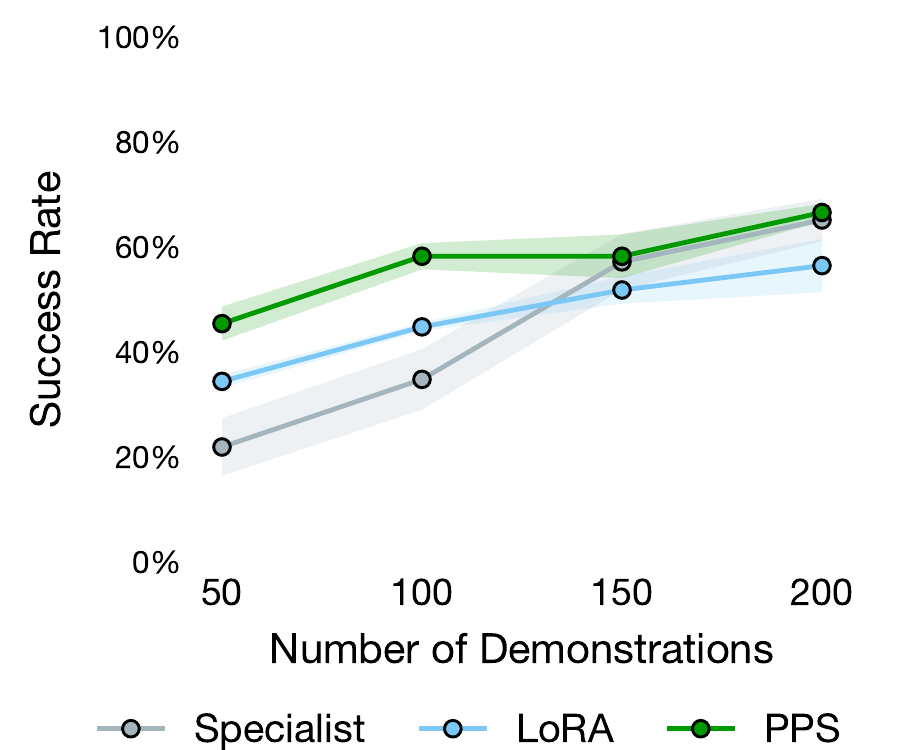}
        \caption{Data scaling on Tea and Capsule.}
        \label{fig:data_scale}
    \end{subfigure}
    \hfill
    \begin{subfigure}[b]{0.32\linewidth}
        \centering
        \includegraphics[width=\linewidth]{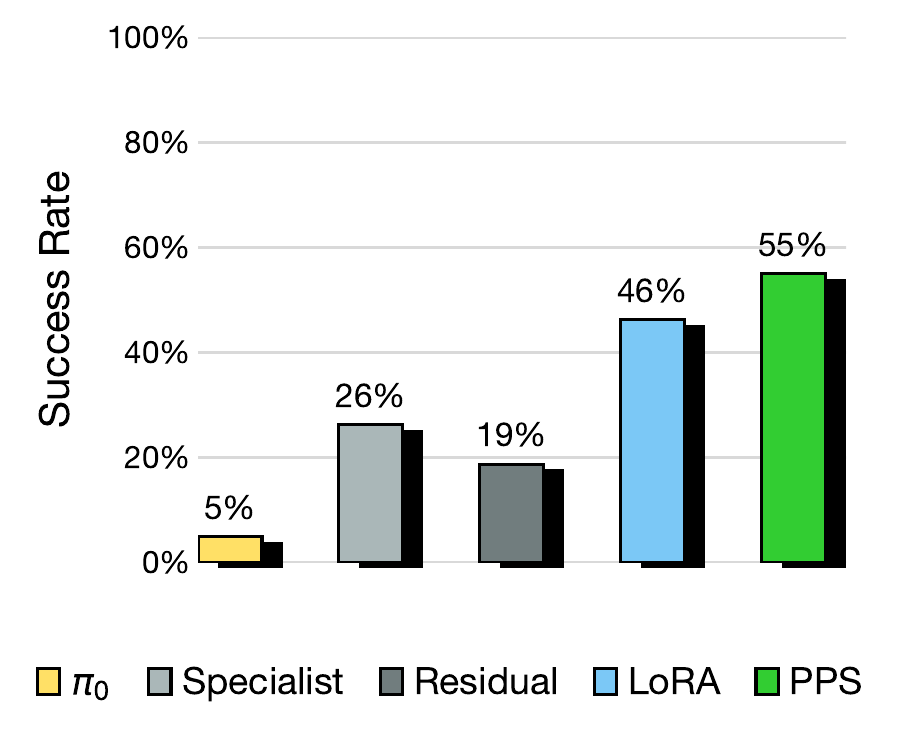}
        \caption{$\pi_0$ real-world results.}
        \label{fig:pi0_avg}
    \end{subfigure}
    \caption{\textbf{Additional results.}
    (a) Average simulated success rate of \MethodAcronym{} against \textbf{Specialist} and \textbf{LoRA} as the trainable-parameter budget grows from 32M to 256M.
    (b) Average simulated success rate on the two lowest-success tasks (Tea and Capsule) as the number of demonstrations grows from 50 to 200.
    (c) Average real-world success rate across the 8 real-world tasks on the $\pi_0$ family. \MethodAcronym{} lifts $\pi_0$ from 5\% to 55\%, indicating that the steering recipe transfers across flow-matching VLAs.}
    \label{fig:additional_exp}
\end{figure*}

\subsection{Scaling up Parameters and Data}
\label{sec:appendix_scale}
\Cref{fig:scale} compares \MethodAcronym{}, \textbf{LoRA}, and \textbf{Specialist} across trainable-parameter budgets from $2^5$M to $2^8$M on the 4 simulation tasks. \MethodAcronym{} is remarkably parameter-efficient: it reaches 64\% success with just 32M trainable parameters, already within 10 points of the best performance either method achieves at any budget, while LoRA requires $8\times$ more parameters to reach the same performance. \textbf{Specialist}, which does not access the base prior, plateaus around 54\% even at 256M parameters, confirming that access to the pretrained VLA's prior rather than simply model capacity drives the performance advantage.

\Cref{fig:data_scale} complements this with a data-scaling sweep on the two lowest-success simulation tasks (Tea and Capsule), growing the number of demonstrations from 50 to 200. \MethodAcronym{} leads at every data budget, with its largest margin in the low-data regime; the from-scratch \textbf{Specialist} improves steadily with more data and catches up near the 200-demonstration budget.

\subsection{Training, Inference, and Parameter Efficiency}
\label{sec:appendix_efficiency}
Because \MethodAcronym{} never backpropagates through the frozen base model, its adaptation cost is substantially lower than weight fine-tuning. Under an identical training configuration on a single A6000 GPU, the complete distill-then-fine-tune pipeline (both proxies) takes \emph{2.9 hours} against \emph{16.3 hours} for \textbf{LoRA}, roughly a $5\times$ reduction.

At inference, \MethodAcronym{} evaluates two additional 32M proxies per denoising step. Since the proxies do not depend on the base model's intermediate activations, they run in parallel with the base policy on a single 4090 GPU, so the added cost is small: \emph{6\,ms} of extra latency per action chunk and \emph{0.13\,GB} of additional GPU memory.

In terms of trainable parameters, the scaling study in \Cref{sec:appendix_scale} shows that LoRA requires roughly $8\times$ more trainable parameters to match the success rate \MethodAcronym{} reaches at 32M.

\subsection{Other Base Policies}
\label{sec:appendix_pi0}
We further apply the \MethodAcronym{} recipe to $\pi_0$~\cite{black2024pi_0} to test whether the steering procedure depends on the specific base policy. \Cref{fig:pi0_avg} reports the average success rate across the 8 real-world tasks. Unsteered $\pi_0$ succeeds on 5\% of trials on average, largely due to 0\% scores on six of the eight tasks (Lever, Shoes, Coffee, Jeans, Flower, Drawer). \MethodAcronym{} lifts the average to 55\%, including zero-to-1 improvements on those six tasks. This confirms that the velocity-space residual targets a property of flow-matching action sampling rather than being $\pi_{0.5}$-specific, and that \MethodAcronym{} remains compatible with other base policies in the same family.

\subsection{Comparison on a Standard Benchmark}
\label{sec:appendix_libero}
Our main evaluation deliberately targets tasks the base VLA rarely solves, since that is the regime where adaptation matters. For completeness, we also compare \MethodAcronym{} against \textbf{DSRL}~\cite{wagenmaker2025steering} on the LIBERO task used in the DSRL paper, under the same base policy $\pi_0$. Both methods lift $\pi_0$ from $17\%$ to $100\%$ success. Performance saturates on this benchmark, so it does not discriminate between the two approaches; the per-task results in \Cref{sec:appendix_per_task} show that the methods separate sharply once the base policy no longer succeeds on its own.

\subsection{Cross-modality Steering}
\label{sec:multimodal}
\MethodAcronym{} also enables injecting new sensing modalities into base models since the standalone proxy policies can accept different inputs than the base model. \Cref{fig:multimodal} shows the two simulated tasks \textbf{Marble}, which takes a point cloud as input to locate the target objects whose visual texture blends in the background, and \textbf{Phone}, which uses sound spectrograms to distinguish the ringing phone from two visually identical phones. \MethodAcronym{} outperforms all baselines, including \textbf{Specialist} and \textbf{DSRL}, which also take the additional modality, and \textbf{LoRA-MM}, which wires the new modality directly into the VLA backbone through end-to-end fine-tuning. 

\begin{figure*}[t]
    \centering
    \includegraphics[width=\linewidth]{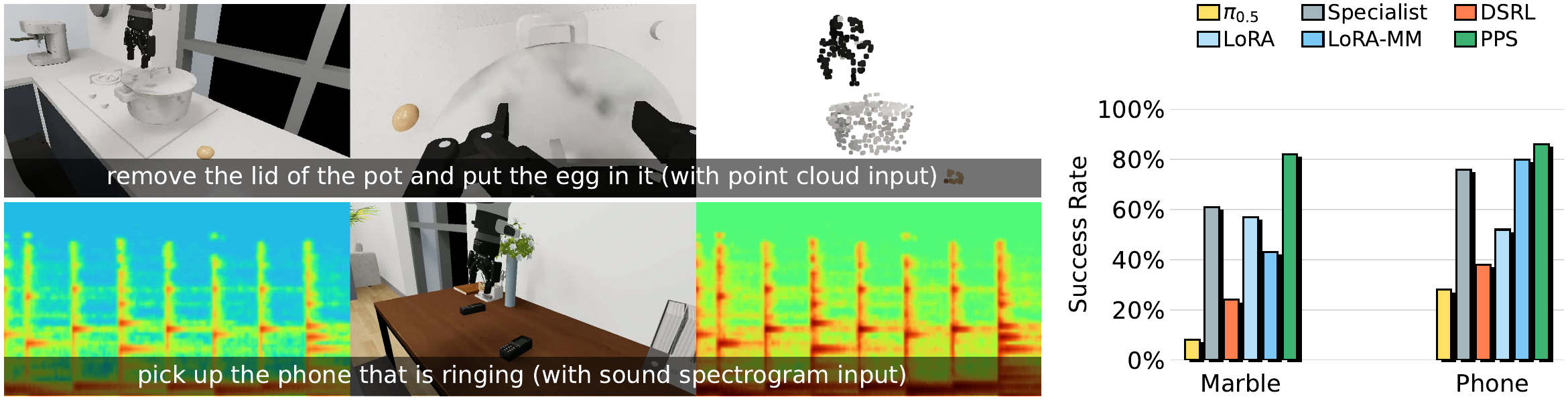}
    \vspace{-4mm}
    \caption{\textbf{Cross-modality steering.} \MethodAcronym{} injects new modalities into the RGB-only base policy.}
    \label{fig:multimodal}
\end{figure*}

\subsection{Robustness Under Mid-Rollout Perturbations}
\label{sec:appendix_perturbation}
\Cref{fig:perturbation} compares \MethodAcronym{} and \textbf{LoRA} on the Jeans task under human interventions during execution. After the human unfolds the partially folded jeans, LoRA cannot recover, and the policy overfits to demonstration-like motions that no longer suit the state. 
\MethodAcronym{} instead adapts to the interventions, repairs the fold, and even survives a second ``drag'' perturbation later in the rollout, eventually completing the task. This supports the claim in \Cref{sec:comparative} that leaving the base VLA frozen preserves the recovery and out-of-distribution behavior the pretrained policy already exhibits, while LoRA's weight-level adaptation overwrites them.

\Cref{fig:disturbance} quantifies this effect. We report $\Delta$SR, the change in success rate when mid-rollout disturbances are applied relative to the undisturbed evaluation, so a value closer to zero indicates a policy that better retains its ability to recover. \textbf{LoRA} loses $15.8$ points and degrades roughly twice as much as \MethodAcronym{}, which loses only $7.8$. Removing the reference proxy is more damaging still: \textbf{\MethodAcronym{} w/o ref} loses $26.1$ points. This is consistent with the role of the reference proxy described in \Cref{sec:appendix_cosine}: without it, the residual carries proxy-specific error in addition to the task-induced change, and the extra noise erodes precisely the base priors that recovery depends on.

\begin{figure*}[t]
    \centering
    \includegraphics[width=\linewidth]{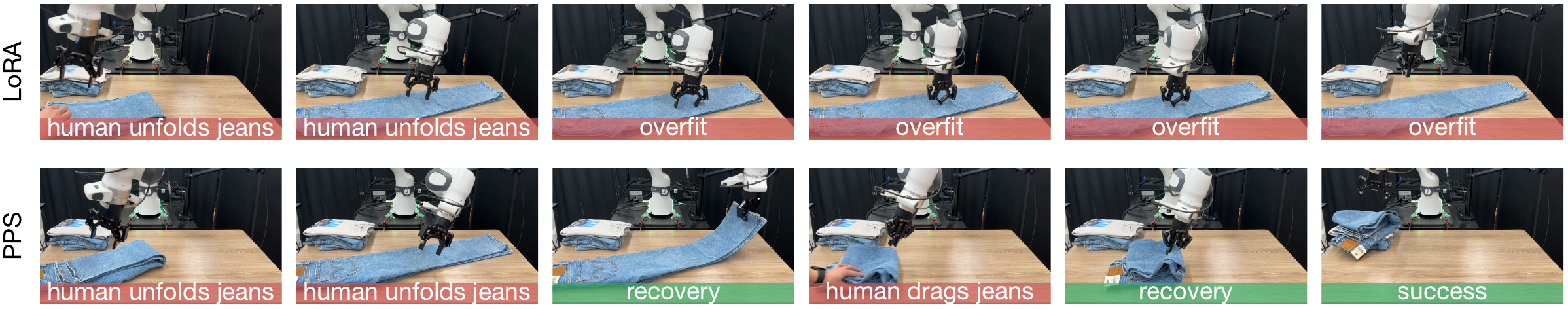}
    \vspace{-4mm}
    \caption{\textbf{Robustness under mid-rollout perturbations.} \textbf{Top (LoRA):} after the first human ``unfold'' intervention LoRA fails to recover and overfits to the demonstration distribution. \textbf{Bottom (\MethodAcronym):} the same rollout under two consecutive interventions (``unfold'', ``drag'') --- \MethodAcronym{} recovers from each and still completes the task on the last frame. Red banners mark perturbation events; green banners mark recovery and the final success.}
    \label{fig:perturbation}
\end{figure*}

\begin{figure}[t]
    \centering
    \begin{subfigure}[t]{0.336\linewidth}
        \centering
        \includegraphics[width=\linewidth]{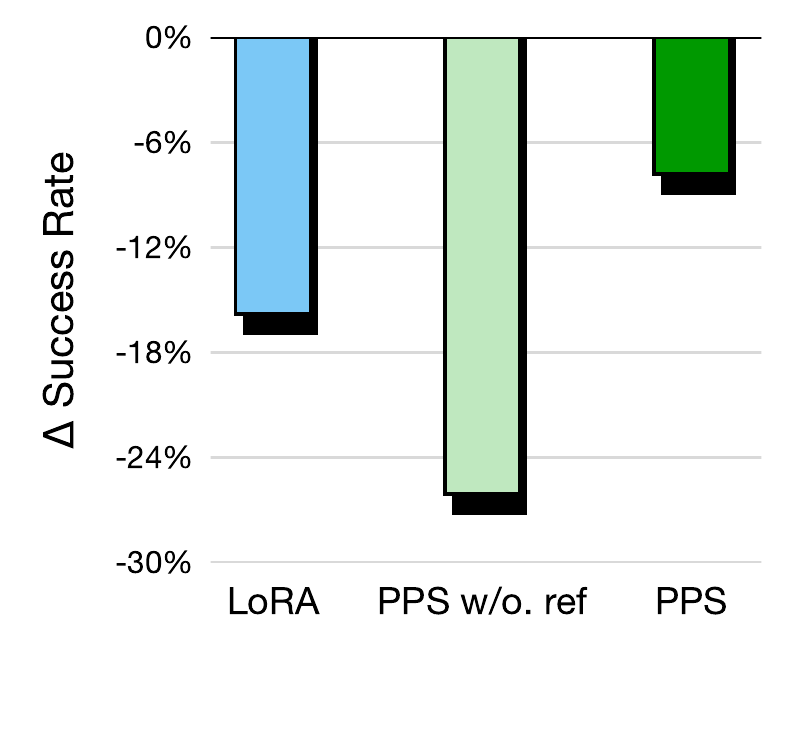}
        \caption{Disturbance robustness}
        \label{fig:disturbance}
    \end{subfigure}
    \hfill
    \begin{subfigure}[t]{0.336\linewidth}
        \centering
        \includegraphics[width=\linewidth]{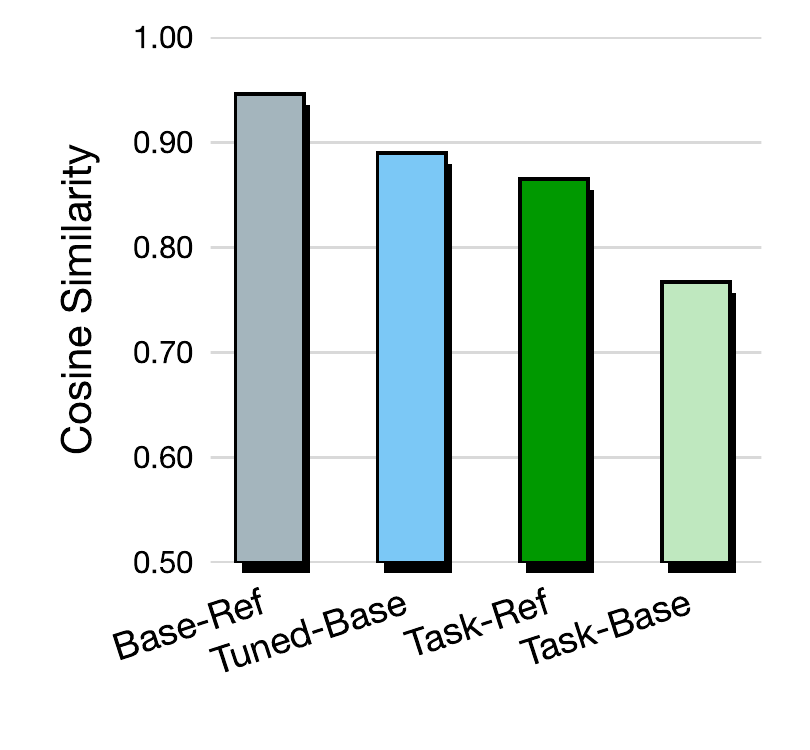}
        \caption{Velocity alignment}
        \label{fig:cosine}
    \end{subfigure}
    \caption{\textbf{Evidence that \MethodAcronym{} preserves the base priors.}
    (a) Change in success rate ($\Delta$SR) when disturbances are applied mid-rollout, relative to the undisturbed evaluation; bars closer to zero are better.
    (b) Average cosine similarity between velocity fields along the steered denoising trajectory. Tuned denotes the base policy after fine-tuning on the task demonstrations, whose shift relative to the base calibrates the size of a genuine task-induced change.}
    \label{fig:preservation}
\end{figure}

\begin{figure*}[t]
    \centering
    \includegraphics[width=\linewidth]{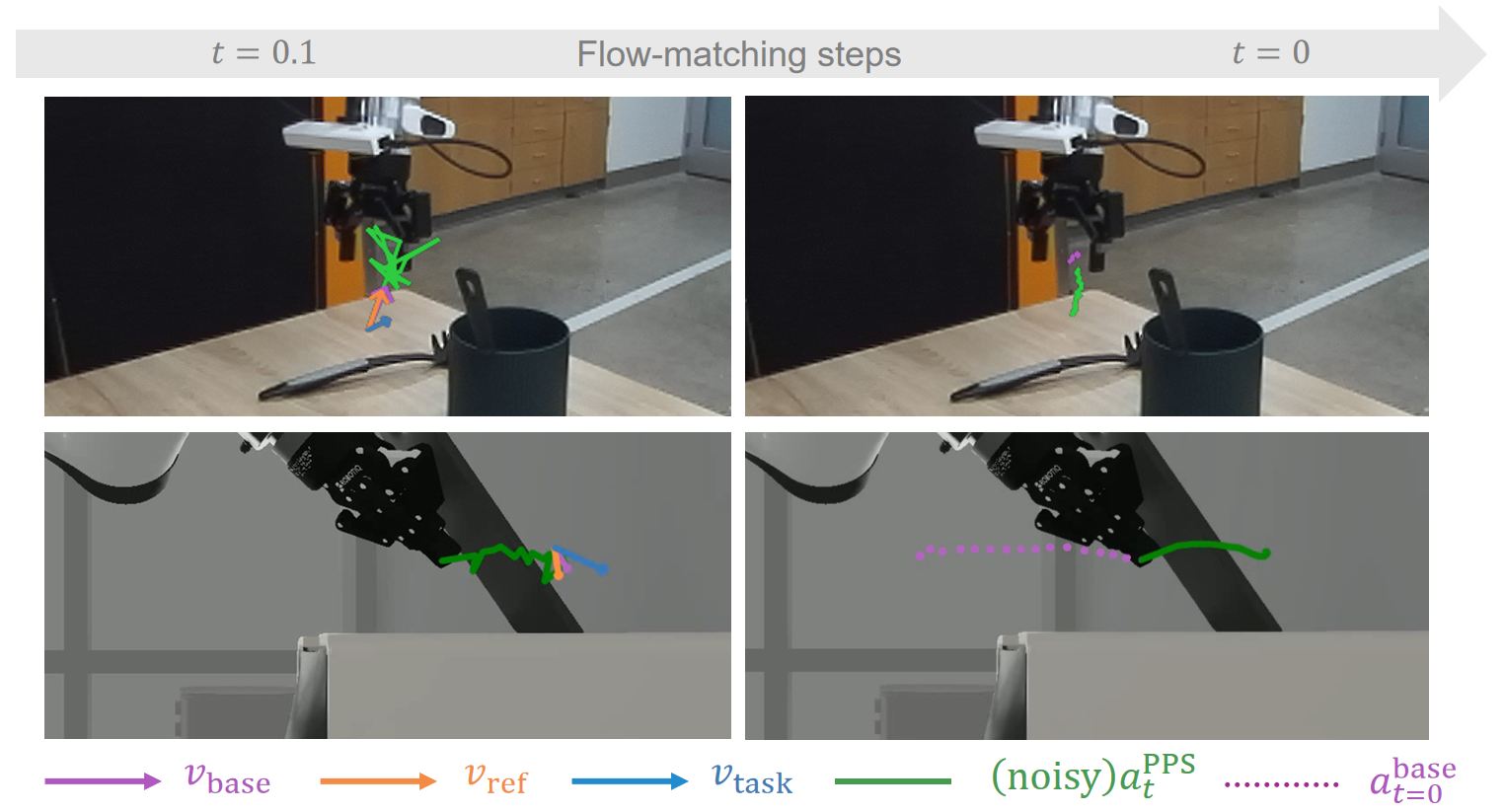}
    \caption{\textbf{Steering visualization.}     
    Velocity vectors and sampled actions during flow-matching denoising. At each denoising step, all policies take the noisy action $(\mathrm{noisy})~a_t^{\mathrm{PPS}}$ as input. The base velocity $v_{\mathrm{base}}$ (purple) points toward an action mode that does not solve the task, while the task proxy velocity $v_{\mathrm{task}}$ (blue) points toward the success mode and the reference proxy velocity $v_{\mathrm{ref}}$ (orange) tracks the base behavior. $a_t^{\mathrm{PPS}}$ (green) is updated using the guided velocity $v_{\mathrm{base}}+\gamma(v_{\mathrm{task}}-v_{\mathrm{ref}})$ throughout the denoising trajectory and converges to the task-success region. The dotted purple action $a_{t=0}^{\mathrm{base}}$ shown in the final column denotes the action obtained by denoising the same initial sample using only the base model, which converges to the incorrect mode.
    }
    \label{fig:steer_vis}
\end{figure*}

\subsection{Velocity Alignment Analysis}
\label{sec:appendix_cosine}
The ablation in the main paper shows that replacing $v_{\mathrm{ref}}$ with the frozen base velocity degrades performance. Here we examine why directly. \Cref{fig:cosine} reports the average cosine similarity between pairs of velocity fields, measured along the steered denoising trajectory on the simulation tasks.

Two observations support the design. First, the reference proxy is an accurate stand-in for the base: $v_{\mathrm{ref}}$ and $v_{\mathrm{base}}$ agree to $0.946$ on the steered path (and $0.935$ on the base model's own path), confirming that on-policy distillation aligns the reference with the base in the region where guidance is applied.

Second, and more importantly, the comparison calibrates how large a genuine task-induced shift should be. Fine-tuning the base itself yields a Tuned--Base similarity of $0.890$: this is the magnitude of the behavioral change that task supervision actually induces. The proxy pair diverges by a comparable amount (Task--Ref, $0.865$). Task--Base, however, falls considerably further, to $0.767$. The additional drop is not task signal. It reflects the architecture and capacity gap between the 32M proxy and the 3B base policy, which the reference proxy shares and therefore cancels, but the base does not. Using $v_{\mathrm{task}}-v_{\mathrm{base}}$ as the residual thus mixes the task-induced change with proxy-specific approximation error, which is what the \textbf{w/o ref} ablation measures and what the disturbance results in \Cref{sec:appendix_perturbation} show to be costly.

\subsection{Steering Visualization}
\label{sec:appendix_steering_vis}
\Cref{fig:steer_vis} visualizes how the velocity residual reshapes the denoising trajectory step by step. We pick two inference snapshots from the \textbf{Utensil} and \textbf{Capsule} tasks and overlay the velocity vector predictions of the base, reference proxy, and task proxy on the last action horizon, together with the shared noisy action chunks. $v_{\mathrm{task}}$ separates from $v_{\mathrm{base}}$ and points toward the task-success mode, while $v_{\mathrm{ref}}$ mimics the base-like vector, which points toward an incorrect action mode.
The task-reference residual $v_{\mathrm{task}}-v_{\mathrm{ref}}$ adds this difference to the base sampler at every flow-matching step, bending the steered trajectory toward the success mode. The final action $a_{t=0}^{\mathrm{PPS}}$ differs visibly from the would-be base action $a_{t=0}^{\mathrm{base}}$.

\section{Velocity Derivation}
\label{app:velocity-derivation}


\MethodAcronym{} defines the steered policy as a product-of-experts correction to the frozen base,
\begin{equation}\label{eq:vg-clean-target}
\pi_{\mathrm{PPS}}(a\mid o,l)\;\propto\;\pi_{\mathrm{base}}(a\mid o,l)\,\left[\pi_{\mathrm{task}}(a\mid o)\big/\pi_{\mathrm{ref}}(a\mid o)\right]^{\gamma}.
\end{equation}
Flow-matching policies do not expose normalized densities. They return only velocity predictions along the linear interpolation path \(x_k=k\epsilon+(1-k)a\), with \(\epsilon\sim\mathcal{N}(0,I)\) and \(k=1\) noise, \(k=0\) clean. This section shows that the correction in \eqref{eq:vg-clean-target}, when applied at each noise level, has an \emph{exact} velocity-space form.

\paragraph{Assumptions.}
\emph{(A1) Shared schedule.} The base policy and both proxies use the same action representation and the same linear interpolation path, so the score-to-velocity relation derived below carries identical coefficients for all three models. Without a shared schedule the three velocity fields are not expressed in a common frame and their difference has no distributional meaning.

\emph{(A2) Common support and integrability.} At every noise level, $\pi_{\mathrm{task}}$ and $\pi_{\mathrm{ref}}$ share support with $\pi_{\mathrm{base}}$, and the corrected density below has a finite normalizer, so the density ratio and its logarithm are well defined.

\emph{(A3) Per-level correction.} We \emph{define} the target at noise level $k$ by applying the same product-of-experts correction to the noised marginals. Writing \(p_{j,k}\) for the marginal density of \(x_k\) under policy \(\pi_j\), \(j\in\{\mathrm{base},\mathrm{task},\mathrm{ref}\}\),
\begin{equation}\label{eq:vg-level-target}
p_{\mathrm{PPS},k}(x)\;\propto\;p_{\mathrm{base},k}(x)\,\left[p_{\mathrm{task},k}(x)\big/p_{\mathrm{ref},k}(x)\right]^{\gamma}.
\end{equation}

Assumptions (A1) and (A2) are design choices that \MethodAcronym{} satisfies by construction: the proxies inherit the base model's flow-matching schedule and action representation, and the reference proxy is distilled from the base on the same observations. Assumption (A3) is different in kind, and we state its status plainly. It is a \emph{modeling choice}, not a consequence of \eqref{eq:vg-clean-target}. Noising does not commute with taking products and ratios of densities, so the family \(\{p_{\mathrm{PPS},k}\}_{k\in[0,1]}\) defined by \eqref{eq:vg-level-target} is in general \emph{not} the family of noise-level marginals of the clean target \eqref{eq:vg-clean-target}. The two coincide at \(k=0\), and trivially for \(\gamma=0\), but not in between. Everything below is exact given (A1)--(A3).

\paragraph{Score form of the correction.}
Taking \(\nabla_x\log\) of both sides of \eqref{eq:vg-level-target}, the normalizer is independent of \(x\) and drops out, leaving an additive score correction
\begin{equation}\label{eq:vg-score}
\nabla_x\log p_{\mathrm{PPS},k}=\nabla_x\log p_{\mathrm{base},k}+\gamma\big(\nabla_x\log p_{\mathrm{task},k}-\nabla_x\log p_{\mathrm{ref},k}\big).
\end{equation}

\paragraph{From scores to velocities.}
It remains to turn this score identity into one between velocities. For the linear path, the marginal velocity \(v(x,k)=\mathbb{E}[\epsilon-a\mid x_k=x]\) of any policy is an affine function of the noised variable and its marginal score \(s_k=\nabla_x\log p_k\). Indeed, with \(\hat{a}=\mathbb{E}[a\mid x_k=x]\) and \(\hat{\epsilon}=\mathbb{E}[\epsilon\mid x_k=x]\), linearity gives \(x=(1-k)\hat{a}+k\hat{\epsilon}\) and \(v=\hat{\epsilon}-\hat{a}\), while \(x_k\mid a\sim\mathcal{N}((1-k)a,k^2 I)\) yields \(\hat{\epsilon}=-k\,s_k\); eliminating \(\hat{a}\) and \(\hat{\epsilon}\) gives
\begin{equation}\label{eq:vg-affine}
v(x,k)=A(k)\,x+B(k)\,s_k(x),\qquad A(k)=-\tfrac{1}{1-k},\quad B(k)=-\tfrac{k}{1-k}.
\end{equation}
The coefficients depend only on the schedule, so by (A1) the same relation holds for \(v_{\mathrm{base}}\), \(v_{\mathrm{task}}\), \(v_{\mathrm{ref}}\), and the steered field \(v_{\mathrm{PPS}}\) with their respective scores. Note that \eqref{eq:vg-affine} is stated for \(k\in[0,1)\): the coefficients diverge at the pure-noise endpoint \(k=1\), and at \(k=0\) we have \(B(0)=0\), so the relation cannot be inverted for the score there.

\begin{proposition}\label{prop:vg}
Assume \emph{(A1)--(A3)}. For every \(k\in(0,1)\), the marginal velocity field of the level-\(k\) target \eqref{eq:vg-level-target} satisfies
\begin{equation}\label{eq:vg-guided-velocity}
v_{\mathrm{PPS}}(x,k,o,l)=v_{\mathrm{base}}(x,k,o,l)+\gamma\big[v_{\mathrm{task}}(x,k,o)-v_{\mathrm{ref}}(x,k,o)\big].
\end{equation}
\end{proposition}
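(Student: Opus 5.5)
The plan is to combine the score identity \eqref{eq:vg-score} with the affine score-to-velocity relation \eqref{eq:vg-affine}, using (A1) to guarantee that the same coefficients $A(k),B(k)$ apply to every field involved. Fix $k\in(0,1)$, so that both coefficients are finite and $B(k)\neq 0$. Apply \eqref{eq:vg-affine} to each of the three policies to get
\[
v_j(x,k)=A(k)\,x+B(k)\,s_{j,k}(x),\qquad j\in\{\mathrm{base},\mathrm{task},\mathrm{ref}\},
\]
with $s_{j,k}=\nabla_x\log p_{j,k}$. By (A2), the level-$k$ target \eqref{eq:vg-level-target} is a normalized density whose log-gradient is well defined.

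The first substantive step is to justify that \eqref{eq:vg-affine} also holds for $v_{\mathrm{PPS}}$ with score $s_{\mathrm{PPS},k}$. That relation was derived for a marginal velocity $\mathbb{E}[\epsilon-a\mid x_k=x]$ of a process whose level-$k$ marginal arises from the Gaussian noising $x_k\mid a\sim\mathcal N((1-k)a,k^2I)$. Under (A3), however, $p_{\mathrm{PPS},k}$ is not in general the noised marginal of a single clean distribution. I therefore plan to read ``the marginal velocity field of the level-$k$ target'' as the field obtained by applying the schedule's score-to-velocity map to it, i.e.\ to \emph{define} $v_{\mathrm{PPS}}(x,k):=A(k)x+B(k)s_{\mathrm{PPS},k}(x)$. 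This is the same convention as in classifier-free guidance, and (A1) is what licenses it. I would state this interpretation explicitly, since I expect it to be the main conceptual obstacle: it is the one place where exactness hinges on a convention rather than a computation. As a consistency check, for each fixed $k$ the family $p_{\mathrm{PPS},k}$ could be realized as the level-$k$ marginal of some clean distribution, by deconvolution, wherever that is possible; but I would not rely on this.

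The remaining step is linear algebra. Substitute \eqref{eq:vg-score} into the definition of $v_{\mathrm{PPS}}$:
\[
v_{\mathrm{PPS}}=A x+B s_{\mathrm{base}}+\gamma B\,(s_{\mathrm{task}}-s_{\mathrm{ref}}).
\]
The first two terms equal $v_{\mathrm{base}}$. Writing $Bs_j=v_j-Ax$ for $j=\mathrm{task},\mathrm{ref}$, the $Ax$ terms cancel inside the difference, so the last term equals $\gamma(v_{\mathrm{task}}-v_{\mathrm{ref}})$. This cancellation of the $Ax$ terms is exactly where the shared schedule matters: the coefficient of $x$ must be identical for task and ref. The result is \eqref{eq:vg-guided-velocity}. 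Finally, the conditioning arguments $(o,l)$ enter only through the densities and pass through unchanged.
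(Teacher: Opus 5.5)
Your proposal is correct and follows essentially the same route as the paper. Both write $B(k)s_{j,k}=v_j-A(k)x$ for each field, combine this with the score identity scaled by $B(k)$, and let the $A(k)x$ terms cancel inside the task--reference difference. The one difference is in how you treat $v_{\mathrm{PPS}}$. The paper simply asserts via (A1) that the affine score-to-velocity relation also holds for the steered field. You correctly note that $p_{\mathrm{PPS},k}$ need not be the Gaussian-noised marginal of any clean distribution, and you state the relation explicitly as the defining convention, which is exactly what the paper's argument implicitly relies on.
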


\begin{proof}
Fix \(k\in(0,1)\), so \(B(k)\neq0\). By \eqref{eq:vg-affine}, each policy satisfies \(B(k)\,s_{j,k}=v_j-A(k)\,x\). Multiplying the score identity \eqref{eq:vg-score} through by \(B(k)\) and substituting this relation for each of the four fields gives
\[
v_{\mathrm{PPS}}-A(k)x=\big(v_{\mathrm{base}}-A(k)x\big)+\gamma\Big[\big(v_{\mathrm{task}}-A(k)x\big)-\big(v_{\mathrm{ref}}-A(k)x\big)\Big].
\]
The \(A(k)x\) terms cancel inside the task--reference difference, and the remaining pair cancels across the equation, which yields \eqref{eq:vg-guided-velocity}. At \(k=0\) the identity holds trivially, since \(B(0)=0\) makes \(v_j(x,0)=-x\) for every policy and both sides reduce to \(-x\).
\end{proof}

\Cref{eq:vg-guided-velocity} is the velocity-space form of the steering rule used at inference. It inherits a natural consistency property from the distribution-level correction: when the task and reference proxies agree, the residual vanishes and $v_{\mathrm{PPS}}=v_{\mathrm{base}}$, so the base policy is left undisturbed. The scalar $\gamma$ controls how strongly the base velocity is tilted toward the task proxy.

\end{document}